\documentclass[sigconf,nonacm]{acmart}
\setkeys{acmart.cls}{balance=false}
\usepackage[ruled,vlined]{algorithm2e}
\usepackage{colortbl}
\usepackage{multirow}
\usepackage{siunitx}
\usepackage{placeins}

\newcommand{\result}[2]{\num{#1}\,(\num[retain-explicit-plus]{#2}\%)}
\newcommand{\percent}[1]{\num[retain-explicit-plus]{#1}\%}
\newcommand{\bluecell}[1]{\cellcolor{highlightcyan}#1}
\newcommand{\graycell}[1]{\cellcolor{gray!10}#1}

\definecolor{gray20}{gray}{0.90}
\definecolor{gray10}{gray}{0.95}
\definecolor{highlightcyan}{HTML}{D7F6FF}
\newtheoremstyle{boldproposition}
  {\topsep}{\topsep}{\itshape}{} {\bfseries}{.}{0.5em}{}
\theoremstyle{boldproposition}
\newtheorem{proposition}{Proposition}[section]

\setcopyright{none}
\makeatletter
\let\hyxmp@parse@acmart\relax
\makeatother

\begin{document}

\title[PhaseBreak: Proactive Temporal Protection]{PhaseBreak: Proactive Temporal Protection against Unauthorized Image-to-Video Animation}

\author{Jiashu Zhang}
\affiliation{%
  \institution{Jilin University}
  \city{Changchun}
  \country{China}
}
\email{zhangjiashu0820@gmail.com}

\author{Xuanyou Li}
\affiliation{%
  \institution{Jilin University}
  \city{Changchun}
  \country{China}
}
\email{2719226114@qq.com}

\author{Qixin Zhang}
\authornote{Corresponding author.}
\affiliation{%
  \institution{Wuhan University}
  \city{Wuhan}
  \country{China}
}
\email{qixin.zhang2026@gmail.com}
\renewcommand{\shortauthors}{Jiashu Zhang et al.}

\begin{abstract}
  Image-to-Video (I2V) diffusion models can animate a single portrait or
  photograph, enabling shared images to be repurposed for motion synthesis
  without the subject's authorization. We study proactive visual protection
  against such use: before an image is released, its owner applies a bounded
  perturbation that preserves the sharing workflow but disrupts downstream
  motion generation. Existing spectral disruption objectives globally
  attenuate temporal representations and therefore provide limited control
  over subtle or spatially localized motion. Our key idea is that generated
  dynamics depend not only on temporal feature strength, but also on the order
  in which feature events occur. We propose \emph{PhaseBreak}, a white-box
  spectral retiming protection method that matches intermediate denoiser
  features to a constrained temporal rearrangement of their clean counterparts.
  PhaseBreak operates along the true temporal axis and uses a subject-aware mask
  to confine the protective perturbation to the depicted subject. Evaluations
  on Stable Video Diffusion and DynamiCrafter show that protected images induce
  greater short-term motion variation and lower temporal regularity, impeding
  stable motion customization from the released image.
\end{abstract}

\keywords{image-to-video generation, proactive image protection, visual privacy, temporal dynamics, spectral optimization}

\maketitle
\hypersetup{pdfauthor={Jiashu Zhang, Xuanyou Li, Qixin Zhang},
  pdfsubject={Proactive image protection against unauthorized image-to-video animation},
  pdfpublisher={}}
\raggedbottom

\section{Introduction}
Image-to-Video (I2V) diffusion models animate a single reference image into a
temporally coherent video by combining image conditioning with learned video
priors~\cite{ho2022videodiffusion,blattmann2023stablevideo,xing2023dynamicrafter}.
Recent systems have improved open-domain animation and generation efficiency,
making reference-conditioned video synthesis increasingly
accessible~\cite{xing2023dynamicrafter,hacohen2024ltxvideo}. Although this
interface lowers the barrier to video creation, it also allows a portrait or
personal photograph to be converted into customized actions without the
depicted subject's authorization. Once an image has been shared, its pixels can
be supplied directly to an I2V generator, while the image owner has little
control over the motion subsequently assigned to the subject. This setting
creates a proactive visual-privacy problem: the image should remain suitable
for ordinary sharing but resist unauthorized animation.

Proactive protection has been studied for unauthorized subject-driven
image synthesis. Anti-DreamBooth perturbs released photographs to interfere
with subsequent subject customization~\cite{vanle2023antidreambooth}, while
SimAC improves anti-customization through timestep and intermediate-feature
selection~\cite{wang2024simac}. IDProtector extends this line to zero-shot,
encoder-based identity-preserving generation~\cite{song2025idprotector}.
However, these methods target the appearance or identity learned by
text-to-image systems. Protecting an image against motion customization is
different: the perturbation must influence an ordered video representation
from a single conditioning frame, and success depends on disrupting generated
dynamics rather than only degrading subject appearance.

A protective perturbation must therefore survive the strong spatio-temporal
prior of the video denoiser. Conventional adversarial objectives constrain
pixel-space changes~\cite{goodfellow2015explaining,madry2018towards}, but do not
explicitly model temporal behavior and may be absorbed by denoising or expressed
mainly as superficial artifacts. This difficulty is most pronounced for
localized actions whose temporal evidence occupies only a limited part of the
intermediate representation.

Spectral disruption methods such as MEDUSA~\cite{yu2026medusa} suppress motion
by attenuating temporal attention toward a low-rank state. Although this
mechanism can reduce pronounced motion, global attenuation does not specify how
fine-grained events should be displaced in time. Weak or localized events may
instead be erased together with useful temporal content. A proactive protection
mechanism therefore requires a more targeted objective that makes the released
image unreliable for motion customization without relying on uniform temporal
suppression.

Intermediate denoiser features retain an ordered trace of temporal events.
Their phase and ordering encode when local changes occur, whereas their
magnitude reflects the strength of those changes. Building on this observation,
our key insight is to protect an image by matching its induced representation
to a nearby temporal reordering of the clean feature sequence. Shifting feature
events to neighboring positions disrupts the temporal organization required for
stable animation while preserving the feature content used to construct the
protection target.

We propose \emph{PhaseBreak}, a white-box spectral retiming framework for
proactive I2V protection. Before releasing an image, the owner constructs a
bounded protective perturbation by matching intermediate denoiser features to a
constrained retimed target. PhaseBreak explicitly preserves the architecture's
true temporal axis and combines time-domain retiming with spectral phase
matching. A subject-aware mask confines both the perturbation and its gradient
to the depicted subject, preventing direct modification of unrelated background
regions.

\textbf{Our contributions are as follows:}
\begin{itemize}
  \item \textbf{Protective Retiming.} We formulate temporal retiming as a
  proactive protection objective for I2V diffusion models. It changes the order
  of local temporal events rather than globally attenuating temporal attention,
  targeting the dynamics used for unauthorized animation.
  \item \textbf{PhaseBreak Framework.} We develop a spectral protection method
  that preserves the true temporal axis while matching a constrained retimed
  target under a bounded, subject-localized image perturbation.
  \item \textbf{Cross-Model Evaluation.} We evaluate the protection on SVD and
  DynamiCrafter using 355 VBench-I2V
  examples \cite{huang2024vbench,huang2026vbenchpp} and complementary
  measurements of motion variation and temporal regularity.
\end{itemize}

\section{Related Work}
\noindent\textbf{Video Diffusion and Image-to-Video Generation.}
Video diffusion models extend denoising-based generation from images to
spatio-temporal data by modeling temporal dependencies alongside appearance
and motion~\cite{ho2022videodiffusion,xing2023videodiffusionsurvey}. Early
large-scale T2V systems include Make-A-Video, Imagen Video, and
ModelScopeT2V~\cite{singer2022makeavideo,ho2022imagenvideo,wang2023modelscope}.
VideoCrafter1 and I2VGen-XL extend open video diffusion toward image
conditioning~\cite{chen2023videocrafter1,zhang2023i2vgenxl}. AnimateDiff adds
transferable motion modules to personalized image models~\cite{guo2023animatediff},
whereas SVD and DynamiCrafter provide general I2V
backbones~\cite{blattmann2023stablevideo,xing2023dynamicrafter}. Meanwhile,
LTX-Video emphasizes efficient latent generation~\cite{hacohen2024ltxvideo}.
These models propagate image conditioning through features that jointly encode
spatial content and temporal evolution.

\noindent\textbf{Adversarial and Protective Perturbations.}
Prior attacks on diffusion-based generators have targeted either the generated
semantics or the robustness of the diffusion process. AdvDiff and Mist use
diffusion models to construct adversarial examples, while semantic attacks
optimize perturbations toward high-level changes rather than isolated pixel
errors~\cite{dai2024advdiff,liang2023mist,wang2023semanticadversarial}. Prior
work also studies attacks against diffusion-based purification and image-to-image
diffusion pipelines~\cite{kang2023diffattack,zeng2024advi2i}. The closest
proactive image defenses disrupt editing, customization, or identity-preserving
generation~\cite{salman2023photoguard,vanle2023antidreambooth,
wang2024simac,song2025idprotector}. For I2V protection, I2VGuard targets misuse
of conditioning images, while Vid-Freeze and MEDUSA directly manipulate
temporal generation behavior~\cite{gui2025i2vguard,chowdhury2025vidfreeze,
yu2026medusa}. MFA provides a complementary diffusion attack based on
timestep vulnerability and mean fluctuation~\cite{yu2024mfa}. MEDUSA, the
closest spectral approach, suppresses temporal behavior by attenuating a
spectral representation of temporal attention.
Beyond protective perturbations, TEAR exposes safety failures arising from
temporal event composition, and T2VAttack evaluates prompt attacks against
semantic and temporal behavior~\cite{he2026tear,li2025t2vattack}. Anti-I2V
extends proactive image protection across video diffusion backbones, whereas
practical evaluations of Stable Diffusion protection show that purification
and deployment conditions can weaken protective noise~\cite{vu2026antii2v,
zhao2024protective}.
This mechanism also suggests that adversarial optimization can be repurposed
as proactive protection before an image is released. However, its global
operation is less suited to localized, fine-grained motion because
it reduces temporal response strength without specifying when a local event
should occur. PhaseBreak instead changes the ordering of intermediate temporal
events through constrained retiming and uses a subject-aware mask to confine
the protective perturbation to the intended subject.

\begin{figure*}[t]
  \centering
  \includegraphics[width=0.80\textwidth]{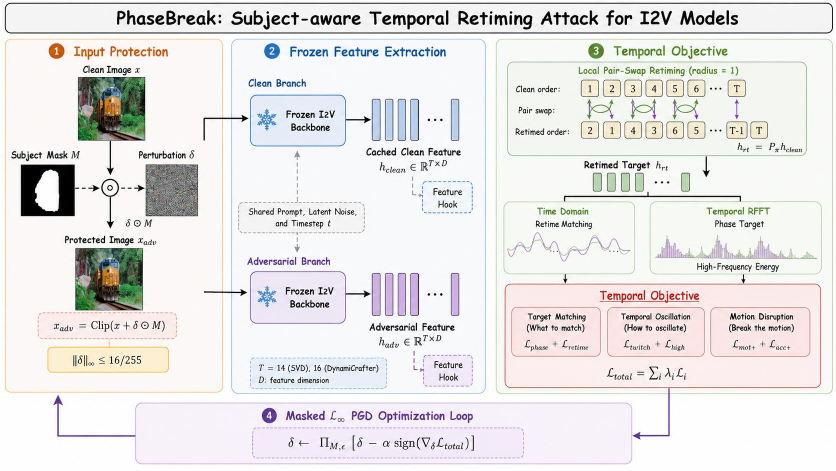}
  \caption{\textbf{Overview of PhaseBreak.} Frozen I2V backbones extract clean
  and protected features, while a subject mask constrains the perturbation.
  Temporal and spectral objectives match the protected features to a locally
  retimed clean target.}
  \Description{Overview of the PhaseBreak pipeline. The diagram contains four
  connected stages: subject-masked input protection, clean and adversarial
  feature extraction with frozen image-to-video backbones, construction of a
  local temporal-retiming target with temporal and spectral objectives, and a
  masked projected-gradient optimization loop.}
  \label{fig:phasebreak-overview}
\end{figure*}

\section{Preliminaries and Problem Formulation}
\subsection{Image-to-Video Diffusion and Temporal Attention}
Let $\mathbf{x}\in[0,1]^{C\times H\times W}$ denote a reference image and
$p$ the text prompt when the model uses text conditioning. An Image-to-Video
(I2V) diffusion model generates
a video $\mathbf{v}\in\mathbb{R}^{T_v\times C\times H\times W}$ by conditioning
its iterative denoising process on $\mathbf{x}$ and, when supported,
$p$~\cite{ho2020ddpm,song2021ddim,rombach2022latent,
ho2022videodiffusion}. At diffusion step
$t$, the denoiser predicts the update of a noisy video latent $\mathbf{z}_t$,
which we write as
\begin{equation}
  \widehat{\boldsymbol{\epsilon}}_t
  = \epsilon_{\theta}\!\left(
      \mathbf{z}_t,t,c_{\mathrm{img}}(\mathbf{x}),c_{\mathrm{txt}}(p)
    \right),
  \label{eq:i2v-denoiser}
\end{equation}
where $\epsilon_{\theta}$ is frozen, $c_{\mathrm{img}}$ encodes the image, and
$c_{\mathrm{txt}}$ is omitted without text conditioning. Repeated denoising and
latent decoding define
$\mathbf{v}=G_{\theta}(\mathbf{x},p;\boldsymbol{\xi})$, where
$\boldsymbol{\xi}$ collects the initial noise and other sampling randomness.

For a denoiser layer $\ell$, we denote its intermediate activation at step
$t$ by
\begin{equation}
  \mathbf{h}_{\ell,t}(\mathbf{x})
  = \phi_{\ell,t}\!\left(
      \mathbf{z}_t,c_{\mathrm{img}}(\mathbf{x}),c_{\mathrm{txt}}(p)
    \right),
  \label{eq:intermediate-feature}
\end{equation}
where $\phi_{\ell,t}$ represents the computation up to layer $\ell$. After
exposing the temporal axis and flattening the remaining spatial axes, we write
$\mathbf{h}_{\ell,t}(\mathbf{x})\in
\mathbb{R}^{T_h\times N\times d}$, where $T_h$ is the feature-level temporal
length, $N$ is the number of spatial tokens per frame, and $d$ is the channel
dimension.

\textbf{Temporal Attention.}
At spatial position $n$, temporal-attention tokens form
$\mathbf{h}_{\ell,t}^{(n)}\in\mathbb{R}^{T_h\times d}$. Learned projections
produce $\mathbf{Q}^{(n)}$, $\mathbf{K}^{(n)}$, and $\mathbf{V}^{(n)}$; their
scaled dot-product interaction~\cite{vaswani2017attention} is
\begin{equation}
  \resizebox{0.84\columnwidth}{!}{$\displaystyle
  \operatorname{Attn}_{\mathrm{temp}}\!\left(
    \mathbf{h}_{\ell,t}^{(n)}(\mathbf{x})
  \right)
  = \operatorname{softmax}\!\left(
      \frac{\mathbf{Q}^{(n)}(\mathbf{K}^{(n)})^{\top}}{\sqrt{d_k}}
    \right)\mathbf{V}^{(n)}$},
  \label{eq:temporal-attention}
\end{equation}
The resulting $T_h\times T_h$ attention matrix aggregates information across
frames. Both evaluated architectures retain a dedicated temporal dimension,
although its tensor index differs.

These activations jointly encode spatial content and temporal evolution. We use
$\mathcal{H}(\mathbf{x})=\{\mathbf{h}_{\ell,t}(\mathbf{x}):
\ell\in\mathcal{L},t\in\mathcal{T}\}$ to denote the collection of features
from the selected layers $\mathcal{L}$ and diffusion steps $\mathcal{T}$.
Section~\ref{sec:phasebreak} specifies the architecture-dependent temporal
dimension.

\subsection{Motivation and Problem Formulation}
\label{sec:problem-formulation}
\textbf{Empirical Observation.}
Under matched sampling, locally permuting a clean feature sequence preserves
its vectors but changes adjacent differences and generally its Fourier phase.
This separates temporal order from global feature strength and motivates
matching a nearby retiming of the clean representation.

\textbf{Theoretical Formulation.}
Flatten the non-temporal dimensions of a feature into
$\mathbf{H}\in\mathbb{R}^{T_h\times D}$, and let
$\mathbf{P}_{\pi}$ be the permutation matrix induced by a temporal schedule
$\pi$. The following proposition establishes that retiming preserves feature
content while providing a controllable target for temporal variation.

\begin{proposition}[Content Preservation under Retiming]
\label{prop:retiming-preservation}
For $\widetilde{\mathbf{H}}=\mathbf{P}_{\pi}\mathbf{H}$,
the temporal mean and feature energy are preserved. Moreover, if
$\mathbf{H}^{p}$ approximates $\widetilde{\mathbf{H}}$ and $\mathbf{D}$ is the
first-difference operator, then
\begin{equation}
  \begin{gathered}
  \mathbb{E}_{T_h}[\widetilde{\mathbf{H}}]
    =\mathbb{E}_{T_h}[\mathbf{H}],\qquad
    \|\widetilde{\mathbf{H}}\|_F=\|\mathbf{H}\|_F,\\
  \|\mathbf{D}\mathbf{H}^{p}-\mathbf{D}\widetilde{\mathbf{H}}\|_F
    \leq 2\|\mathbf{H}^{p}-\widetilde{\mathbf{H}}\|_F.
  \end{gathered}
  \label{eq:retiming-preservation}
\end{equation}
\end{proposition}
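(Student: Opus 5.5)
The three claims are independent, and each one reduces to an elementary property of permutation matrices or of the difference operator. We treat them in the order stated in Proposition~\ref{prop:retiming-preservation}.

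\textbf{Temporal mean.} Write the temporal mean as $\mathbb{E}_{T_h}[\mathbf{H}]=\frac{1}{T_h}\mathbf{1}^{\top}\mathbf{H}\in\mathbb{R}^{1\times D}$, where $\mathbf{1}\in\mathbb{R}^{T_h}$ is the all-ones vector. Every permutation matrix has exactly one unit entry in each column, so $\mathbf{1}^{\top}\mathbf{P}_{\pi}=\mathbf{1}^{\top}$. It follows that $\mathbb{E}_{T_h}[\mathbf{P}_{\pi}\mathbf{H}]=\frac{1}{T_h}\mathbf{1}^{\top}\mathbf{H}$, which is the mean of $\mathbf{H}$.

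\textbf{Energy.} We use the orthogonality of permutation matrices, $\mathbf{P}_{\pi}^{\top}\mathbf{P}_{\pi}=\mathbf{I}$. Then
\[
\|\mathbf{P}_{\pi}\mathbf{H}\|_F^2=\operatorname{tr}(\mathbf{H}^{\top}\mathbf{P}_{\pi}^{\top}\mathbf{P}_{\pi}\mathbf{H})=\|\mathbf{H}\|_F^2 .
\]
Equivalently, $\mathbf{P}_{\pi}$ only reorders the rows of $\mathbf{H}$, so the sum of squared entries is unchanged.

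\textbf{Difference bound.} Fix the convention $\mathbf{D}\in\mathbb{R}^{(T_h-1)\times T_h}$ with $(\mathbf{D}\mathbf{X})_i=\mathbf{X}_{i+1}-\mathbf{X}_i$. By linearity, $\mathbf{D}\mathbf{H}^{p}-\mathbf{D}\widetilde{\mathbf{H}}=\mathbf{D}\mathbf{E}$ with $\mathbf{E}=\mathbf{H}^{p}-\widetilde{\mathbf{H}}$. Since $\|\mathbf{D}\mathbf{E}\|_F\le\|\mathbf{D}\|_2\|\mathbf{E}\|_F$, it suffices to show $\|\mathbf{D}\|_2\le 2$. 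Write $\mathbf{D}=\mathbf{S}-\mathbf{R}$, where $\mathbf{S}$ selects rows $2,\dots,T_h$ and $\mathbf{R}$ selects rows $1,\dots,T_h-1$. Each of these has operator norm $1$, so the triangle inequality gives $\|\mathbf{D}\|_2\le 2$.

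A row-wise argument gives the same bound directly: $\|\mathbf{E}_{i+1}-\mathbf{E}_i\|^2\le 2\|\mathbf{E}_{i+1}\|^2+2\|\mathbf{E}_i\|^2$. Summing over $i$, each row of $\mathbf{E}$ appears at most twice, so $\|\mathbf{D}\mathbf{E}\|_F^2\le 4\|\mathbf{E}\|_F^2$.

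\textbf{Possible obstacle.} The statement does not fix the boundary convention for $\mathbf{D}$. The proof should state the convention explicitly, and also note that the bound still holds for a circular difference, $(\mathbf{D}\mathbf{X})_i=\mathbf{X}_{i+1 \bmod T_h}-\mathbf{X}_i$. In that case $\mathbf{D}=\mathbf{C}-\mathbf{I}$ with $\mathbf{C}$ a cyclic shift. $\mathbf{C}$ is a permutation and hence orthogonal, so $\|\mathbf{D}\|_2\le 2$ again.

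Beyond that, nothing is hard here. The proposition is essentially bookkeeping: permutations preserve sums and norms, and $\mathbf{D}$ is a difference of two contractions.
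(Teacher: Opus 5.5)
Your proof is correct and follows the paper's argument exactly. The mean is preserved by $\mathbf{1}^{\top}\mathbf{P}_{\pi}=\mathbf{1}^{\top}$, the Frobenius norm by $\mathbf{P}_{\pi}^{\top}\mathbf{P}_{\pi}=\mathbf{I}$, and the difference bound comes from $\|\mathbf{D}\mathbf{E}\|_F\le\|\mathbf{D}\|_2\|\mathbf{E}\|_F$ with $\|\mathbf{D}\|_2\le 2$; the paper only asserts the last inequality, so your decomposition $\mathbf{D}=\mathbf{S}-\mathbf{R}$ and your remark on the circular convention usefully justify that step.
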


\begin{proof}
Since $\mathbf{P}_{\pi}^{\top}\mathbf{P}_{\pi}=\mathbf{I}$ and
$\mathbf{1}^{\top}\mathbf{P}_{\pi}=\mathbf{1}^{\top}$, permutation preserves
the norm and mean. Submultiplicativity gives
$\|\mathbf{D}(\mathbf{H}^{p}-\widetilde{\mathbf{H}})\|_F
\leq\|\mathbf{D}\|_2
\|\mathbf{H}^{p}-\widetilde{\mathbf{H}}\|_F$,
and $\|\mathbf{D}\|_2\leq2$ yields Eq.~\eqref{eq:retiming-preservation}.
\end{proof}

Proposition~\ref{prop:retiming-preservation} provides the theoretical basis of
PhaseBreak. A constrained permutation changes when feature events occur
without removing their content, while reducing the distance to the retimed
target bounds the corresponding first-order dynamic error. This result leads
to the time-domain retiming loss; because temporal order is also expressed by
Fourier phase, PhaseBreak additionally aligns the phase of the protected
feature with that of the same retimed target.

\textbf{Protection Setting.}
We consider an image owner who applies PhaseBreak before sharing a reference
image. The owner has white-box access to the anticipated I2V generator,
including its parameters, sampling procedure, and intermediate denoiser
features, and can differentiate through the image-conditioning and denoising
paths. The protected image is then released without modifying the downstream
model or sampler. During optimization, the clean and protected branches share
the prompt and sampling randomness $\boldsymbol{\xi}$, so their difference is
attributable to the protective image perturbation. The objective is to prevent
the released image from supporting stable, unauthorized motion synthesis.

\textbf{Perturbation Constraint.}
Given a clean image $\mathbf{x}$, the owner releases
$\mathbf{x}^{\mathrm{prot}}=\mathbf{x}+\boldsymbol{\delta}$, where
$\|\boldsymbol{\delta}\|_{\infty}\leq\varepsilon$ and
$\mathbf{x}^{\mathrm{prot}}\in[0,1]^{C\times H\times W}$. A subject mask
$\mathbf{M}$ restricts admissible changes to the depicted subject; we denote
this bounded, masked feasible set by
$\mathcal{S}(\mathbf{x},\varepsilon,\mathbf{M})$. The projection used to
enforce these constraints is specified in Section~\ref{sec:subject-mask}.

\textbf{Optimization Objective.}
The protection seeks to disrupt unauthorized motion synthesis under a bounded,
spatially localized image perturbation. We express this goal through a
differentiable feature objective $\mathcal{J}$ evaluated between protected and
clean intermediate representations:
\begin{equation}
  \boldsymbol{\delta}^{\star}
  = \arg\min_{\boldsymbol{\delta}\in
      \mathcal{S}(\mathbf{x},\varepsilon,\mathbf{M})}
    \frac{1}{|\mathcal{L}||\mathcal{T}|}
    \sum_{\ell\in\mathcal{L}}\sum_{t\in\mathcal{T}}
    \mathcal{J}\!\left(
      \mathbf{h}_{\ell,t}(\mathbf{x}+\boldsymbol{\delta}),
      \mathbf{h}_{\ell,t}(\mathbf{x})
    \right).
  \label{eq:phasebreak-objective}
\end{equation}
The clean representation is fixed, and gradients pass only through the
protected branch. Section~\ref{sec:phasebreak} instantiates $\mathcal{J}$ with
a constrained temporal-retiming target; no target video or frame-level motion
annotation is required.

\section{PhaseBreak}
\label{sec:phasebreak}
\subsection{Overview}
Figure~\ref{fig:phasebreak-overview} summarizes the pipeline that instantiates
Eq.~\eqref{eq:phasebreak-objective}. PhaseBreak identifies the temporal axis of
each hooked feature, constructs a constrained retimed clean target, and matches
the protected feature to it with temporal and spectral losses. Auxiliary
dynamics terms strengthen short-lived changes, while masked projected updates
enforce the image-space constraint. Alg.~\ref{alg:phasebreak} gives the complete
optimization.

\subsection{Subject-Localization Mask}
\label{sec:subject-mask}
\textbf{Image-space Localization.}
To enforce the feasible set introduced in
Section~\ref{sec:problem-formulation},
PhaseBreak clamps the aligned subject mask $\mathbf{M}$ to $[0,1]$, broadcasts
it across image channels, and applies it to both the perturbation and its
gradient. For an unconstrained update $\widehat{\boldsymbol{\delta}}$, the
masked projection is
\begin{equation}
  \Pi_{\mathcal{S}}(\widehat{\boldsymbol{\delta}})
  = \mathbf{M}\odot\left(
      \operatorname{clip}_{[0,1]}\!\left(
        \mathbf{x}+\operatorname{clip}_{[-\varepsilon,\varepsilon]}(
          \mathbf{M}\odot\widehat{\boldsymbol{\delta}})
      \right)-\mathbf{x}
    \right).
  \label{eq:masked-projection}
\end{equation}
The first multiplication prevents an update outside the localized region, the
$\ell_\infty$ clipping enforces the protection budget, and the image clipping
keeps the perturbed reference in its valid range. The final multiplication
removes any value outside the mask after image-range projection. When no mask
is supplied, $\mathbf{M}$ is an all-ones tensor and Eq.~\eqref{eq:masked-projection}
reduces to the standard image-space projection.

The mask acts only in image space; the temporal loss is evaluated on the full
intermediate activation. Consequently, the mask determines where the
protected image can change, while temporal supervision remains defined by the
selected layer and diffusion timestep.

\subsection{Architecture-Specific Temporal Axis}
\textbf{Preserving the Temporal Axis.}
PhaseBreak supplies the temporal dimension explicitly to every temporal
operator. For a denoiser architecture $m$, let
$d_{\mathrm{temp}}^{(m)}$ denote the temporal axis of its intermediate feature
$\mathbf{h}^{(m)}$. Its temporal length is defined by
\begin{equation}
  T_h^{(m)}=\operatorname{size}\!\left(
    \mathbf{h}^{(m)},d_{\mathrm{temp}}^{(m)}
  \right).
  \label{eq:model-temporal-axes}
\end{equation}
The clean and protected branches use the same $d_{\mathrm{temp}}^{(m)}$.
FFT, retiming, first differences, and second differences operate only along
this axis; all remaining dimensions are reduced only when forming scalar
losses.

\begin{algorithm}[!t]
\footnotesize
\caption{Optimization of PhaseBreak}
\label{alg:phasebreak}
\KwIn{$\mathbf{x},p,\mathbf{M},\varepsilon,\alpha,K,\mathcal{L},\mathcal{T}$}
\KwOut{Protected reference image $\mathbf{x}^{\mathrm{prot}}$}
Encode $p$; sample the shared latent; prepare $\mathbf{M}$\;
Cache detached clean features $\{\mathbf{h}^{c}_{\ell,t}\}$\;
Initialize $\boldsymbol{\delta}_0=\mathbf{0}$\;
\For{$k\leftarrow 0$ \KwTo $K-1$}{
  $\mathbf{x}^{\mathrm{prot}}_k\leftarrow
  \operatorname{clip}_{[0,1]}(\mathbf{x}+\boldsymbol{\delta}_k)$\;
  \ForEach{$t\in\mathcal{T}$ and $\ell\in\mathcal{L}$}{
    Extract $\mathbf{h}^{p}_{\ell,t}$ along the verified temporal axis\;
    Construct $\boldsymbol{\pi}$ and $\widetilde{\mathbf{h}}^{c}_{\ell,t}$
    under $(r,B_0,B_1)$\;
    Accumulate $\mathcal{J}_{\ell,t}$\;
  }
  Average the losses and update $\boldsymbol{\delta}_k$ using
  Eq.~\eqref{eq:masked-pgd-update}\;
}
$\mathbf{x}^{\mathrm{prot}}\leftarrow
\operatorname{clip}_{[0,1]}(\mathbf{x}+\boldsymbol{\delta}_K)$\;
\Return{$\mathbf{x}^{\mathrm{prot}}$}\;
\end{algorithm}

\subsection{Constrained Temporal Retiming}
\textbf{Local Retiming Schedule.}
Let $\mathbf{h}^{c}$ and $\mathbf{h}^{p}$ denote a clean intermediate feature
and its protected counterpart, and let $T_h$ be their temporal length. PhaseBreak
constructs a bijective temporal permutation $\pi\in\mathfrak{S}_{T_h}$.
Writing $\mathbf{i}=(0,\ldots,T_h-1)$ and
$\boldsymbol{\pi}=(\pi(0),\ldots,\pi(T_h-1))$, the schedule is constrained by
\begin{equation}
  \|\boldsymbol{\pi}-\mathbf{i}\|_{\infty}\leq r,\qquad
  \|\boldsymbol{\pi}-\mathbf{i}\|_{0}\leq B_0,\qquad
  \|\boldsymbol{\pi}-\mathbf{i}\|_{1}\leq B_1,
  \label{eq:retime-constraints}
\end{equation}
where $r$ is the retiming radius, $B_0$ bounds the number of moved temporal
positions, and $B_1$ bounds their total displacement. We generate a feasible
permutation as
$\boldsymbol{\pi}=\mathcal{G}_{\rho}(T_h;r,B_0,B_1,\zeta)$, where $\rho$ denotes
the schedule family and $\zeta$ fixes its stochastic realization. The generator
reorders existing values without duplicating or deleting a temporal position.
The retimed target is
\begin{equation}
  \widetilde{\mathbf{h}}^{c}
  = \mathcal{P}_{\pi}(\mathbf{h}^{c})
  = \bigl(\mathbf{h}^{c}_{\pi(q)}\bigr)_{q=0}^{T_h-1},
  \label{eq:retimed-target}
\end{equation}
and is detached from the computation graph.

\textbf{Dual-domain Retiming Objective.}
For a feature $\mathbf{h}$, define temporal centering and its orthonormal real
Fourier transform as
\begin{equation}
  \overline{\mathbf{h}}
  = \mathbf{h}-\mathbb{E}_{T_h}[\mathbf{h}],\qquad
  \mathbf{F}(\mathbf{h})
  = \operatorname{rFFT}_{T_h}(\overline{\mathbf{h}}).
  \label{eq:temporal-spectrum}
\end{equation}
The unit spectrum is
$\mathbf{U}(\mathbf{h})=\mathbf{F}(\mathbf{h})/(|\mathbf{F}(\mathbf{h})|+\eta)$,
where $\eta>0$ prevents division by zero. Excluding the zero-frequency bin,
the clean magnitude defines normalized weights
\begin{equation}
  \mathbf{w}_k
  = \frac{|\mathbf{F}_k(\mathbf{h}^{c})|}
  {\sum_{j>0}|\mathbf{F}_j(\mathbf{h}^{c})|+\eta},\qquad k>0.
  \label{eq:frequency-weights}
\end{equation}
The spectral branch matches the protected phase to the phase of the retimed
target:
\begin{equation}
  \mathcal{L}_{\mathrm{phase}}
  = \mathbb{E}\!\left[
      \sum_{k>0}\mathbf{w}_k
      \left|\mathbf{U}_k(\mathbf{h}^{p})
      -\mathbf{U}_k(\widetilde{\mathbf{h}}^{c})\right|^2
    \right].
  \label{eq:phase-loss}
\end{equation}
The time-domain branch directly aligns centered activations. With
$s=(\mathbb{E}[(\overline{\widetilde{\mathbf{h}}^{c}})^2])^{1/2}+\eta$, it is
\begin{equation}
  \mathcal{L}_{\mathrm{retime}}
  = \mathbb{E}\!\left[
      \left(
        \frac{\overline{\mathbf{h}}^{p}
        -\overline{\widetilde{\mathbf{h}}^{c}}}{s}
      \right)^2
    \right].
  \label{eq:retime-loss}
\end{equation}
These two terms form the retiming main line: Eq.~\eqref{eq:retime-loss}
specifies where temporal activations move, while Eq.~\eqref{eq:phase-loss}
matches the corresponding spectral timing structure.

\begin{table*}[t]
  \centering
  \small
  \setlength{\tabcolsep}{0.5pt}
  \renewcommand{\arraystretch}{0.88}
  \caption{\textbf{Main comparison with protection baselines.} Each protected
  cell reports the adversarial mean followed by its relative change from the
  shared clean mean in parentheses. Arrows indicate stronger disruption.}
  \label{tab:temporal-main}

  \resizebox{\textwidth}{!}{%
  \begin{tabular}{@{}c|l|ccccc@{}}
    \toprule
    \rowcolor{gray!20}\textbf{Model} & \textbf{Method} & \textbf{Acc.}$\uparrow$ & \textbf{Dir.}$\uparrow$ & \textbf{Jerk}$\uparrow$ & \textbf{Warp.}$\uparrow$ & \textbf{Smooth.}$\downarrow$ \\
    \midrule\midrule
    \multirow{7}{*}{\rotatebox[origin=c]{90}{SVD}} & Clean & \num{0.001462} & \num{0.019978} & \num{0.002270} & \num{0.022405} & \num{0.981167} \\
     & \graycell{ PhotoGuard} & \graycell{\result{0.001976}{+35.16}} & \graycell{\result{0.070912}{+254.95}} & \graycell{\result{0.002585}{+13.88}} & \graycell{\result{0.021762}{-2.87}} & \graycell{\result{0.981725}{+0.06}} \\
     & MFA & \result{0.001583}{+8.28} & \result{0.114267}{+471.96} & \result{0.002472}{+8.90} & \result{0.022020}{-1.72} & \result{0.976059}{-0.52} \\
     & \graycell{ I2VGuard} & \graycell{\result{0.001513}{+3.49}} & \graycell{\result{0.110851}{+454.87}} & \graycell{\result{0.003748}{+65.11}} & \graycell{\result{0.023346}{+4.20}} & \graycell{\result{0.987498}{+0.65}} \\
     & Vid-Freeze & \result{0.001483}{+1.44} & \result{0.102103}{+411.08} & \result{0.002418}{+6.52} & \result{0.023700}{+5.78} & \result{0.980561}{-0.06} \\
     & \graycell{ MEDUSA} & \graycell{\result{0.000250}{-82.90}} & \graycell{\textbf{\result{0.127252}{+536.96}}} & \graycell{\result{0.000424}{-81.32}} & \graycell{\result{0.009559}{-57.34}} & \graycell{\result{0.992043}{+1.11}} \\
    & \bluecell{\textbf{PhaseBreak (Ours)}} & \bluecell{\textbf{\result{0.002326}{+59.10}}} & \bluecell{\result{0.112167}{+461.45}} & \bluecell{\textbf{\result{0.003873}{+70.62}}} & \bluecell{\textbf{\result{0.029517}{+31.74}}} & \bluecell{\textbf{\result{0.975051}{-0.62}}} \\
    \midrule
    \multirow{7}{*}{\rotatebox[origin=c]{90}{DynamiCrafter}} & Clean & \num{0.003317} & \num{0.258702} & \num{0.005406} & \num{0.028229} & \num{0.977188} \\
     & \graycell{ PhotoGuard} & \graycell{\result{0.002638}{-20.47}} & \graycell{\result{0.33500}{+29.49}} & \graycell{\result{0.004948}{-8.47}} & \graycell{\result{0.029238}{+3.57}} & \graycell{\result{0.975864}{-0.14}} \\
     & MFA & \result{0.005687}{+71.45} & \result{0.198410}{-23.31} & \result{0.010354}{+91.53} & \result{0.041243}{+46.10} & \result{0.966840}{-1.06} \\
     & \graycell{ I2VGuard} & \graycell{\result{0.015390}{+363.97}} & \graycell{\result{0.328809}{+27.10}} & \graycell{\result{0.027944}{+416.91}} & \graycell{\result{0.048104}{+70.41}} & \graycell{\result{0.957872}{-1.98}} \\
     & Vid-Freeze & \result{0.002753}{-17.00} & \result{0.363420}{+40.48} & \result{0.005184}{-4.11} & \result{0.045276}{+60.39} & \result{0.964037}{-1.35} \\
     & \graycell{ MEDUSA} & \graycell{\result{0.023450}{+606.96}} & \graycell{\result{0.375700}{+45.23}} & \graycell{\result{0.041645}{+670.35}} & \graycell{\result{0.060005}{+112.57}} & \graycell{\result{0.94913}{-2.87}} \\
    & \bluecell{\textbf{PhaseBreak (Ours)}} & \bluecell{\textbf{\result{0.024396}{+635.48}}} & \bluecell{\textbf{\result{0.383502}{+48.24}}} & \bluecell{\textbf{\result{0.045559}{+742.75}}} & \bluecell{\textbf{\result{0.069542}{+146.35}}} & \bluecell{\textbf{\result{0.939679}{-3.84}}} \\
    \bottomrule
  \end{tabular}}
\end{table*}

\begin{figure*}[t]
  \centering
  \setlength{\tabcolsep}{2pt}
  \begin{tabular}{@{}cc@{}}
    \includegraphics[width=0.49\textwidth,viewport=85 405 760 505,clip]{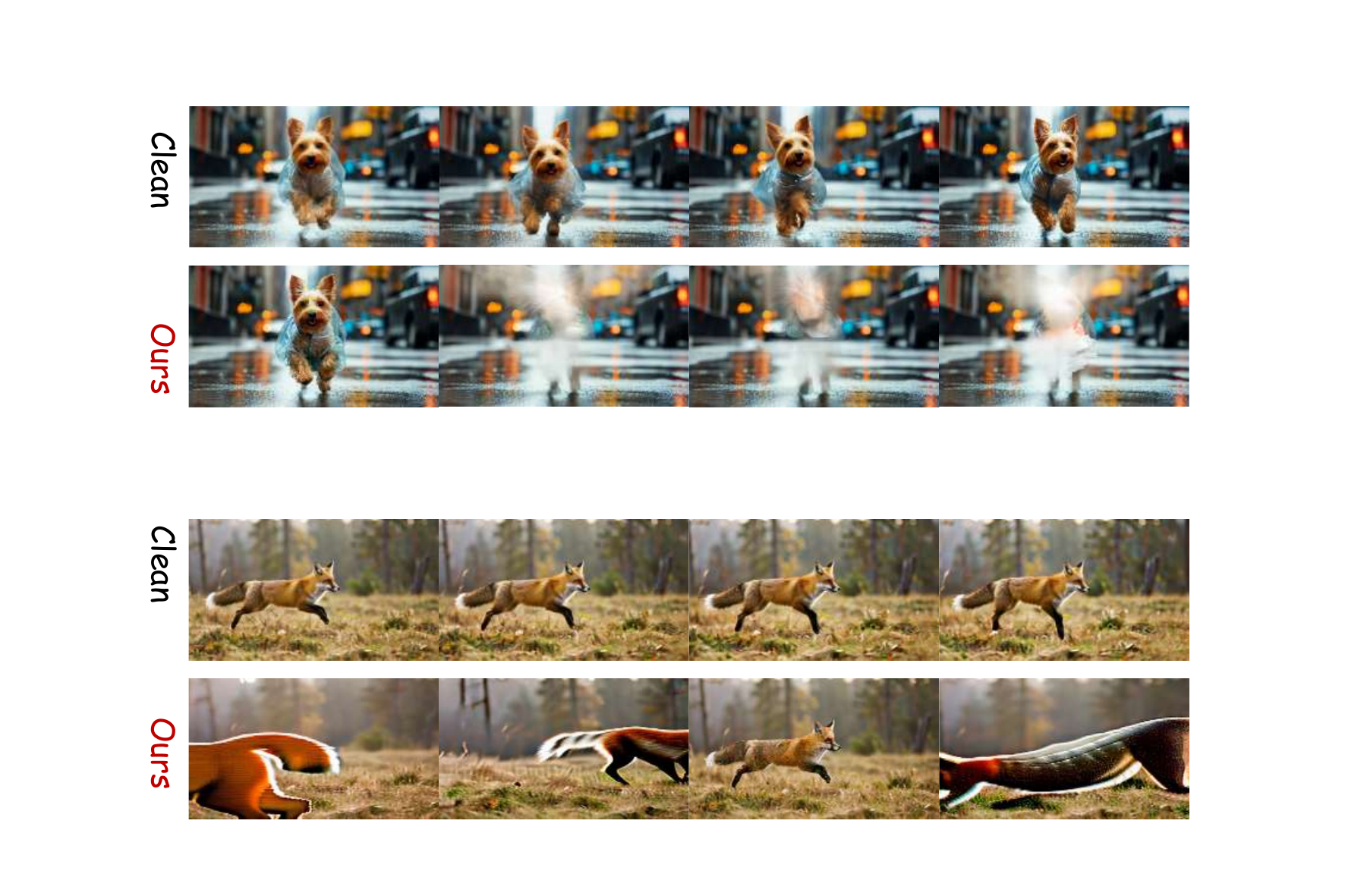} &
    \includegraphics[width=0.49\textwidth,viewport=85 143 760 243,clip]{phasebreak_qualitative.pdf} \\
    \includegraphics[width=0.49\textwidth,viewport=85 303 760 403,clip]{phasebreak_qualitative.pdf} &
    \includegraphics[width=0.49\textwidth,viewport=85 42 760 142,clip]{phasebreak_qualitative.pdf}
  \end{tabular}
  \caption{\textbf{Qualitative comparison of clean and protected generation.}
  Each column shows one reference example, with the clean sequence on top and
  the PhaseBreak sequence below.}
  \Description{Two qualitative examples arranged in two columns. Each column
  compares four clean generated frames on top with four PhaseBreak-protected
  frames below. The protected sequences exhibit increasing deformation of the
  animated animal subject.}
  \label{fig:qualitative-results}
\end{figure*}

\textbf{Auxiliary Dynamics Branches.}
Four differentiable terms supplement, but do not redefine, the retiming
target. Let
$\Delta\mathbf{h}_q=\mathbf{h}_{q+1}-\mathbf{h}_q$. The twitch term minimizes
the cosine similarity of consecutive differences,
\begin{equation}
  \mathcal{L}_{\mathrm{twitch}}
  = \mathbb{E}_{q}\!\left[
      \frac{\langle\Delta\mathbf{h}^{p}_q,
      \Delta\mathbf{h}^{p}_{q+1}\rangle}
      {\|\Delta\mathbf{h}^{p}_q\|_2
       \|\Delta\mathbf{h}^{p}_{q+1}\|_2+\eta}
    \right],
  \label{eq:twitch-loss}
\end{equation}
favoring alternating temporal directions. The high-frequency term is the
negative fraction of nonzero spectral energy at frequencies at or above a
cutoff $k_c$:
\begin{equation}
  \mathcal{L}_{\mathrm{high}}
  = -\mathbb{E}\!\left[
      \frac{\sum_{k\geq k_c}|\mathbf{F}_k(\mathbf{h}^{p})|^2}
      {\sum_{k>0}|\mathbf{F}_k(\mathbf{h}^{p})|^2+\eta}
    \right].
  \label{eq:high-frequency-loss}
\end{equation}
The motion and acceleration branches respectively maximize normalized first-
and second-difference energy:
\begin{equation}
  \resizebox{0.98\columnwidth}{!}{$\displaystyle
  \mathcal{L}_{\mathrm{mot+}}=-\frac{\mathbb{E}[\|\Delta\mathbf{h}^{p}\|_2^2]}
  {\mathbb{E}[\|\overline{\mathbf{h}}^{p}\|_2^2]+\eta},\quad
  \mathcal{L}_{\mathrm{acc+}}=-\frac{\mathbb{E}[\|\Delta^2\mathbf{h}^{p}\|_2^2]}
  {\mathbb{E}[\|\Delta\mathbf{h}^{p}\|_2^2]+\eta}. $}
  \label{eq:dynamics-boost-losses}
\end{equation}

\subsection{Masked Projected Optimization}
\textbf{Joint Objective and Update.}
For each selected denoiser layer $\ell\in\mathcal{L}$ and diffusion timestep
$t\in\mathcal{T}$, PhaseBreak evaluates the active objective
\begin{equation}
  \begin{aligned}
  \mathcal{J}_{\ell,t}
  ={}& \lambda_{\mathrm{phase}}\mathcal{L}_{\mathrm{phase}}
  + \lambda_{\mathrm{retime}}\mathcal{L}_{\mathrm{retime}}
  + \lambda_{\mathrm{twitch}}\mathcal{L}_{\mathrm{twitch}}
  + \lambda_{\mathrm{high}}\mathcal{L}_{\mathrm{high}}\\
  &+ \lambda_{\mathrm{mot+}}\mathcal{L}_{\mathrm{mot+}}
  + \lambda_{\mathrm{acc+}}\mathcal{L}_{\mathrm{acc+}},
  \end{aligned}
  \label{eq:complete-objective}
\end{equation}
PhaseBreak averages Eq.~\eqref{eq:complete-objective} across
$\mathcal{L}\times\mathcal{T}$, as in Eq.~\eqref{eq:phasebreak-objective}.
Clean features are computed once and detached. Following
PGD~\cite{madry2018towards}, the perturbation update at iteration $k$ is
\begin{equation}
  \boldsymbol{\delta}_{k+1}
  = \Pi_{\mathcal{S}}\!\left(
      \boldsymbol{\delta}_{k}
      -\alpha\,\operatorname{sign}\!\left(
        \mathbf{M}\odot\nabla_{\boldsymbol{\delta}_{k}}\mathcal{J}
      \right)
    \right),
  \label{eq:masked-pgd-update}
\end{equation}
where $\alpha$ is the step size. The clean and protected branches share the
prompt, latent noise, and evaluated diffusion steps. Only the protected
branch contributes gradients.

\section{Experiments}
\subsection{Experimental Setup}
\textbf{Models and Data.}
We evaluate Stable Video Diffusion (SVD)~\cite{blattmann2023stablevideo} and
DynamiCrafter~\cite{xing2023dynamicrafter} on 355 examples from the VBench-I2V
collection~\cite{huang2024vbench,huang2026vbenchpp}. SVD uses the $16{:}9$ crops and subject masks at a resolution of
$1024\times576$ and generates 14 frames. DynamiCrafter uses the $1{:}1$ crops
and masks at $256\times256$ and generates 16 frames. DynamiCrafter receives the
English prompt associated with each example, whereas SVD uses image
conditioning without a text prompt.

\textbf{Protection Configuration.}
We optimize each reference image for 250 masked PGD iterations using an
$\ell_\infty$ budget of $16/255$, a step size of $0.01$, and zero
initialization. Clean and protected branches share the prompt, initial noise,
and diffusion timestep ($t=800$) to isolate the effect of the input
perturbation. For each frozen denoiser, we extract one intermediate temporal
representation and apply PhaseBreak along its verified temporal axis, which
is dimension 1 with 14 positions for SVD and dimension 2 with 16 positions for
DynamiCrafter. We instantiate $\mathcal{G}_{\rho}$ using non-overlapping
adjacent swaps with $r=1$ and a fixed seed. The movement budgets equal the
corresponding temporal length, allowing each position to move by at most one
step.

\textbf{Loss Weights.}
We set $\lambda_{\mathrm{phase}}$ to $0.05$,
$\lambda_{\mathrm{retime}}$ to $0.2$, $\lambda_{\mathrm{twitch}}$ to $3.0$,
$\lambda_{\mathrm{high}}$ to $2.0$, and both
$\lambda_{\mathrm{mot+}}$ and $\lambda_{\mathrm{acc+}}$ to $0.8$. SVD uses
full-precision optimization, whereas DynamiCrafter uses mixed precision.

\subsection{Evaluation Metrics}
We report five optical-flow-based temporal measurements. Flow acceleration and
flow jerk measure first- and higher-order changes in motion magnitude; larger
values indicate stronger short-term variation. Flow direction inconsistency
measures changes in motion orientation, and flow warping error measures
frame-to-frame alignment error; both increase as temporal consistency degrades.
Motion smoothness measures the regularity of the motion trajectory, for which a
lower value indicates stronger disruption. We report the clean mean, the
protected mean, and their relative change under matched generation settings.

\subsection{Main Results}
\textbf{Cross-Model Temporal Disruption.}
We compare PhaseBreak with PhotoGuard~\cite{salman2023photoguard},
MFA~\cite{yu2024mfa},
I2VGuard~\cite{gui2025i2vguard}, Vid-Freeze~\cite{chowdhury2025vidfreeze}, and
MEDUSA~\cite{yu2026medusa} to test consistent disruption of motion dynamics.
Table~\ref{tab:temporal-main} shows that PhaseBreak moves all
five metrics in the effective direction and provides the strongest overall
disruption across the two evaluated models.

\subsection{Transferability}
\textbf{Bidirectional Cross-Model Transfer.}
We transfer protected references between the two architectures to assess
cross-model effectiveness under matched generation settings.
Table~\ref{tab:transferability} shows that all five metrics move in the
effective direction in both transfer settings. These results demonstrate that
the learned protection generalizes bidirectionally across I2V architectures.

\begin{table}[t]
  \centering
  \small
  \setlength{\tabcolsep}{1.2pt}
  \renewcommand{\arraystretch}{1.0}
  \caption{\textbf{Bidirectional transferability.} Relative changes when
  protections optimized on the training model are evaluated with the inference
  model. DC denotes DynamiCrafter.}
  \label{tab:transferability}
  \resizebox{\columnwidth}{!}{%
  \begin{tabular}{@{}cc|ccccc@{}}
    \toprule
    \rowcolor{gray!20}
    \textbf{Training} & \textbf{Inference} & \textbf{Acc.}$\uparrow$
    & \textbf{Dir.}$\uparrow$ & \textbf{Jerk}$\uparrow$
    & \textbf{Warp.}$\uparrow$ & \textbf{Smooth.}$\downarrow$ \\
    \midrule\midrule
    \graycell{SVD} & \graycell{DC} & \graycell{\percent{+28.75}}
      & \graycell{\percent{+2.29}} & \graycell{\percent{+36.50}}
      & \graycell{\percent{+31.92}} & \graycell{\percent{-0.97}} \\
    DC & SVD & \percent{+143.71} & \percent{+237.43} & \percent{+159.23}
      & \percent{+30.62} & \percent{-0.44} \\
    \bottomrule
  \end{tabular}}
\end{table}

\subsection{Perturbation Budget}
\textbf{Effect of the Perturbation Budget.}
We evaluate BudgetSwap at $\varepsilon\in\{4,8,12\}/255$ to examine sensitivity to
the perturbation budget. Table~\ref{tab:budget-swap} shows that PhaseBreak
consistently disrupts temporal dynamics across all evaluated budgets, while
the larger budget produces particularly strong effects on DynamiCrafter.

\begin{table}[t]
  \centering
  \small
  \setlength{\tabcolsep}{0.5pt}
  \caption{\textbf{BudgetSwap.} Relative changes under three $\ell_\infty$
  perturbation budgets. DC denotes DynamiCrafter.}
  \label{tab:budget-swap}

  \begin{tabular}{@{}>{\centering\arraybackslash}p{0.125\columnwidth}|
    >{\raggedright\arraybackslash}p{0.18\columnwidth}|
    *{5}{>{\centering\arraybackslash}p{0.13\columnwidth}}@{}}
    \toprule
    \rowcolor{gray!20}\textbf{Model} & \textbf{$\varepsilon$} & \textbf{Acc.}$\uparrow$ & \textbf{Dir.}$\uparrow$ & \textbf{Jerk}$\uparrow$ & \textbf{Warp.}$\uparrow$ & \textbf{Smth.}$\downarrow$ \\
    \midrule\midrule
    \multirow{3}{*}{\rotatebox[origin=c]{90}{SVD}} & \graycell{$4/255$} & \graycell{\textbf{\percent{+38.72}}} & \graycell{\percent{+209.31}} & \graycell{\textbf{\percent{+36.80}}} & \graycell{\percent{+12.39}} & \graycell{\percent{-0.31}} \\
     & $8/255$ & \percent{+14.48} & \percent{+304.27} & \percent{+15.65} & \percent{+17.60} & \percent{-0.39} \\
    & \graycell{$12/255$} & \graycell{\percent{+20.12}} & \graycell{\textbf{\percent{+480.10}}} & \graycell{\percent{+22.17}} & \graycell{\textbf{\percent{+20.48}}} & \graycell{\textbf{\percent{-0.43}}} \\
    \midrule
    \multirow{3}{*}{\rotatebox[origin=c]{90}{DC}} & \graycell{$4/255$} & \graycell{\percent{+461.30}} & \graycell{\percent{+31.83}} & \graycell{\percent{+546.24}} & \graycell{\percent{+88.76}} & \graycell{\percent{-2.55}} \\
     & $8/255$ & \percent{+286.52} & \percent{+41.51} & \percent{+343.62} & \percent{+83.84} & \percent{-2.41} \\
    & \graycell{$12/255$} & \graycell{\textbf{\percent{+547.85}}} & \graycell{\textbf{\percent{+51.65}}} & \graycell{\textbf{\percent{+615.22}}} & \graycell{\textbf{\percent{+108.56}}} & \graycell{\textbf{\percent{-3.21}}} \\
    \bottomrule
  \end{tabular}
\end{table}

\subsection{Ablation Study}
\textbf{Contribution of Loss Components.}
We remove each evaluated term while retaining
$\mathcal{L}_{\mathrm{phase}}$ and all other terms in
Eq.~\eqref{eq:complete-objective}. Table~\ref{tab:loss-ablation} shows that the
full objective provides the strongest and most balanced temporal disruption
across the two architectures. Together, the ablation results confirm that
retiming and the auxiliary dynamics terms provide complementary temporal
supervision.

\begin{table}[t]
  \centering
  \small
  \setlength{\tabcolsep}{0.5pt}
  \caption{\textbf{Loss ablation.} Relative changes after removing one
  evaluated PhaseBreak objective term; $\mathcal{L}_{\mathrm{phase}}$ remains
  active in every variant.}
  \label{tab:loss-ablation}

  \begin{tabular}{@{}>{\centering\arraybackslash}p{0.125\columnwidth}|
    >{\raggedright\arraybackslash}p{0.20\columnwidth}|
    *{5}{>{\centering\arraybackslash}p{0.126\columnwidth}}@{}}
    \toprule
    \rowcolor{gray!20}\textbf{Model} & \textbf{Variant} & \textbf{Acc.}$\uparrow$ & \textbf{Dir.}$\uparrow$ & \textbf{Jerk}$\uparrow$ & \textbf{Warp.}$\uparrow$ & \textbf{Smth.}$\downarrow$ \\
    \midrule\midrule
    \multirow{6}{*}{\rotatebox[origin=c]{90}{SVD}} & \graycell{w/o $\mathcal{L}_{\mathrm{retime}}$} & \graycell{\percent{+42.74}} & \graycell{\percent{+421.28}} & \graycell{\percent{+41.68}} & \graycell{\percent{+23.29}} & \graycell{\percent{-0.57}} \\
     & w/o $\mathcal{L}_{\mathrm{twitch}}$ & \percent{+53.84} & \percent{+452.70} & \percent{+57.22} & \percent{+22.93} & \percent{-0.52} \\
     & \graycell{w/o $\mathcal{L}_{\mathrm{high}}$} & \graycell{\percent{+54.70}} & \graycell{\percent{+443.12}} & \graycell{\percent{+66.56}} & \graycell{\percent{+24.00}} & \graycell{\percent{-0.54}} \\
     & w/o $\mathcal{L}_{\mathrm{mot+}}$ & \percent{+56.22} & \percent{+391.63} & \percent{+65.40} & \percent{+27.61} & \percent{-0.58} \\
     & \graycell{w/o $\mathcal{L}_{\mathrm{acc+}}$} & \graycell{\percent{+25.80}} & \graycell{\percent{+450.71}} & \graycell{\percent{+25.00}} & \graycell{\percent{+29.64}} & \graycell{\percent{-0.59}} \\
    & \bluecell{\textbf{Full (Ours)}} & \bluecell{\textbf{\percent{+59.10}}} & \bluecell{\textbf{\percent{+461.45}}} & \bluecell{\textbf{\percent{+70.62}}} & \bluecell{\textbf{\percent{+31.74}}} & \bluecell{\textbf{\percent{-0.62}}} \\
    \midrule
    \multirow{6}{*}{\rotatebox[origin=c]{90}{DynamiCrafter}} & \graycell{w/o $\mathcal{L}_{\mathrm{retime}}$} & \graycell{\percent{+548.07}} & \graycell{\percent{+48.07}} & \graycell{\percent{+717.87}} & \graycell{\percent{+141.81}} & \graycell{\percent{-3.42}} \\
     & w/o $\mathcal{L}_{\mathrm{twitch}}$ & \percent{+579.27} & \percent{+43.89} & \percent{+673.76} & \textbf{\percent{+148.95}} & \percent{-3.52} \\
     & \graycell{w/o $\mathcal{L}_{\mathrm{high}}$} & \graycell{\percent{+422.94}} & \graycell{\percent{+46.35}} & \graycell{\percent{+479.53}} & \graycell{\percent{+115.31}} & \graycell{\percent{-3.26}} \\
     & w/o $\mathcal{L}_{\mathrm{mot+}}$ & \percent{+339.09} & \percent{+45.35} & \percent{+375.27} & \percent{+93.45} & \percent{-2.66} \\
     & \graycell{w/o $\mathcal{L}_{\mathrm{acc+}}$} & \graycell{\percent{+460.98}} & \graycell{\percent{+46.96}} & \graycell{\percent{+524.57}} & \graycell{\percent{+141.30}} & \graycell{\percent{-3.74}} \\
    & \bluecell{\textbf{Full (Ours)}} & \bluecell{\textbf{\percent{+635.48}}} & \bluecell{\textbf{\percent{+48.24}}} & \bluecell{\textbf{\percent{+742.75}}} & \bluecell{\percent{+146.35}} & \bluecell{\textbf{\percent{-3.84}}} \\
    \bottomrule
  \end{tabular}
\end{table}

\subsection{Additional Experiments}
Loss-weight sensitivity experiments are provided in
Appendix~\ref{app:loss-weight-sensitivity}. Evaluations of the visual quality
of protected input images are reported in Appendix~\ref{app:input-image-quality}.

\section{Limitations}
We evaluate short image-conditioned videos of 14 or 16 frames from VBench-I2V.
This setting does not establish whether feature retiming remains effective in
long-horizon generation. PhaseBreak assumes white-box access to intermediate
denoiser features and gradients of the anticipated I2V generator, restricting
direct application to accessible models rather than API-only services.

Protection is optimized offline for each image before release. This workflow
excludes already-distributed images and applications requiring immediate
protection without per-image optimization.

\section{Conclusion}
PhaseBreak proactively protects shared images against unauthorized I2V
animation by matching intermediate denoiser features to constrained temporal
rearrangements of clean features, while a subject-aware mask confines
perturbations to the depicted subject. This mechanism targets temporal ordering
rather than globally attenuating features. Experiments on Stable Video Diffusion
and DynamiCrafter show increased short-term motion variation, reduced temporal
regularity, and effective bidirectional cross-model transfer, establishing
temporal retiming as an effective mechanism for preventing stable motion
customization from released images.

\clearpage
\bibliographystyle{ACM-Reference-Format}
\bibliography{ref}


\begin{thebibliography}{36}


\ifx \showCODEN    \undefined \def \showCODEN     #1{\unskip}     \fi
\ifx \showISBNx    \undefined \def \showISBNx     #1{\unskip}     \fi
\ifx \showISBNxiii \undefined \def \showISBNxiii  #1{\unskip}     \fi
\ifx \showISSN     \undefined \def \showISSN      #1{\unskip}     \fi
\ifx \showLCCN     \undefined \def \showLCCN      #1{\unskip}     \fi
\ifx \shownote     \undefined \def \shownote      #1{#1}          \fi
\ifx \showarticletitle \undefined \def \showarticletitle #1{#1}   \fi
\ifx \showURL      \undefined \def \showURL       {\relax}        \fi
\providecommand\bibfield[2]{#2}
\providecommand\bibinfo[2]{#2}
\providecommand\natexlab[1]{#1}
\providecommand\showeprint[2][]{arXiv:#2}
\makeatletter
\@ifundefined{NAT@parse@date}{}{\let\NAT@parse@date@orig\NAT@parse@date}
\@ifundefined{NAT@parse@date}{}{\def\NAT@parse@date#1#2#3#4#5#6@@{\NAT@parse@date@orig#1#2#3#4#5#6@@\def\NAT@tempyear{0000}\def\NAT@tempexlab{{?}}\ifx\NAT@year\NAT@tempyear\ifx\NAT@exlab\NAT@tempexlab\def\NAT@date{[n.\,d.]}\else\edef\NAT@date{[n.\,d.]\NAT@exlab}\fi\fi}}
\makeatother

\bibitem[Blattmann et~al\mbox{.}(2023)]%
        {blattmann2023stablevideo}
\bibfield{author}{\bibinfo{person}{Andreas Blattmann}, \bibinfo{person}{Tim
  Dockhorn}, \bibinfo{person}{Sumith Kulal}, \bibinfo{person}{Daniel
  Mendelevitch}, \bibinfo{person}{Maciej Kilian}, \bibinfo{person}{Dominik
  Lorenz}, \bibinfo{person}{Yam Levi}, \bibinfo{person}{Zion English},
  \bibinfo{person}{Vikram Voleti}, \bibinfo{person}{Adam Letts},
  \bibinfo{person}{Varun Jampani}, {and} \bibinfo{person}{Robin Rombach}.}
  \bibinfo{year}{2023}\natexlab{}.
\newblock \showarticletitle{{Stable Video Diffusion}: Scaling Latent Video
  Diffusion Models to Large Datasets}.
\newblock \bibinfo{journal}{\emph{arXiv preprint arXiv:2311.15127}}
  (\bibinfo{year}{2023}).
\newblock


\bibitem[Chen et~al\mbox{.}(2023)]%
        {chen2023videocrafter1}
\bibfield{author}{\bibinfo{person}{Haoxin Chen}, \bibinfo{person}{Menghan Xia},
  \bibinfo{person}{Yingqing He}, \bibinfo{person}{Yong Zhang},
  \bibinfo{person}{Xiaodong Cun}, \bibinfo{person}{Shaoshu Yang},
  \bibinfo{person}{Jinbo Xing}, \bibinfo{person}{Yaofang Liu},
  \bibinfo{person}{Qifeng Chen}, \bibinfo{person}{Xintao Wang},
  \bibinfo{person}{Chao Weng}, {and} \bibinfo{person}{Ying Shan}.}
  \bibinfo{year}{2023}\natexlab{}.
\newblock \showarticletitle{{VideoCrafter1}: Open Diffusion Models for
  High-Quality Video Generation}.
\newblock \bibinfo{journal}{\emph{arXiv preprint arXiv:2310.19512}}
  (\bibinfo{year}{2023}).
\newblock


\bibitem[Chowdhury et~al\mbox{.}(2025)]%
        {chowdhury2025vidfreeze}
\bibfield{author}{\bibinfo{person}{Rohit Chowdhury}, \bibinfo{person}{Aniruddha
  Bala}, \bibinfo{person}{Rohan Jaiswal}, {and} \bibinfo{person}{Siddharth
  Roheda}.} \bibinfo{year}{2025}\natexlab{}.
\newblock \showarticletitle{{Vid-Freeze}: Protecting Images from Malicious
  Image-to-Video Generation via Temporal Freezing}.
\newblock \bibinfo{journal}{\emph{arXiv preprint arXiv:2509.23279}}
  (\bibinfo{year}{2025}).
\newblock


\bibitem[Dai et~al\mbox{.}(2024)]%
        {dai2024advdiff}
\bibfield{author}{\bibinfo{person}{Xuelong Dai}, \bibinfo{person}{Kaisheng
  Liang}, {and} \bibinfo{person}{Bin Xiao}.} \bibinfo{year}{2024}\natexlab{}.
\newblock \showarticletitle{{AdvDiff}: Generating Unrestricted Adversarial
  Examples using Diffusion Models}. In \bibinfo{booktitle}{\emph{Computer
  Vision--ECCV 2024}}, Vol.~\bibinfo{volume}{15104}.
  \bibinfo{publisher}{Springer}, \bibinfo{pages}{93--109}.
\newblock


\bibitem[Goodfellow et~al\mbox{.}(2015)]%
        {goodfellow2015explaining}
\bibfield{author}{\bibinfo{person}{Ian~J. Goodfellow},
  \bibinfo{person}{Jonathon Shlens}, {and} \bibinfo{person}{Christian
  Szegedy}.} \bibinfo{year}{2015}\natexlab{}.
\newblock \showarticletitle{Explaining and Harnessing Adversarial Examples}. In
  \bibinfo{booktitle}{\emph{International Conference on Learning
  Representations}}.
\newblock


\bibitem[Gui et~al\mbox{.}(2025)]%
        {gui2025i2vguard}
\bibfield{author}{\bibinfo{person}{Dongnan Gui}, \bibinfo{person}{Xun Guo},
  \bibinfo{person}{Wengang Zhou}, {and} \bibinfo{person}{Yan Lu}.}
  \bibinfo{year}{2025}\natexlab{}.
\newblock \showarticletitle{{I2VGuard}: Safeguarding Images against Misuse in
  Diffusion-based Image-to-Video Models}. In
  \bibinfo{booktitle}{\emph{Proceedings of the IEEE/CVF Conference on Computer
  Vision and Pattern Recognition}}. \bibinfo{pages}{12595--12604}.
\newblock


\bibitem[Guo et~al\mbox{.}(2024)]%
        {guo2023animatediff}
\bibfield{author}{\bibinfo{person}{Yuwei Guo}, \bibinfo{person}{Ceyuan Yang},
  \bibinfo{person}{Anyi Rao}, \bibinfo{person}{Zhengyang Liang},
  \bibinfo{person}{Yaohui Wang}, \bibinfo{person}{Yu Qiao},
  \bibinfo{person}{Maneesh Agrawala}, \bibinfo{person}{Dahua Lin}, {and}
  \bibinfo{person}{Bo Dai}.} \bibinfo{year}{2024}\natexlab{}.
\newblock \showarticletitle{{AnimateDiff}: Animate Your Personalized
  Text-to-Image Diffusion Models without Specific Tuning}. In
  \bibinfo{booktitle}{\emph{International Conference on Learning
  Representations}}.
\newblock


\bibitem[HaCohen et~al\mbox{.}(2024)]%
        {hacohen2024ltxvideo}
\bibfield{author}{\bibinfo{person}{Yoav HaCohen}, \bibinfo{person}{Nisan
  Chiprut}, \bibinfo{person}{Benny Brazowski}, \bibinfo{person}{Daniel Shalem},
  \bibinfo{person}{Dudu Moshe}, \bibinfo{person}{Eitan Richardson},
  \bibinfo{person}{Eran Levin}, \bibinfo{person}{Guy Shiran},
  \bibinfo{person}{Nir Zabari}, \bibinfo{person}{Ori Gordon},
  \bibinfo{person}{Poriya Panet}, \bibinfo{person}{Sapir Weissbuch},
  \bibinfo{person}{Victor Kulikov}, \bibinfo{person}{Yaki Bitterman},
  \bibinfo{person}{Zeev Melumian}, {and} \bibinfo{person}{Ofir Bibi}.}
  \bibinfo{year}{2024}\natexlab{}.
\newblock \showarticletitle{{LTX-Video}: Realtime Video Latent Diffusion}.
\newblock \bibinfo{journal}{\emph{arXiv preprint arXiv:2501.00103}}
  (\bibinfo{year}{2024}).
\newblock


\bibitem[He et~al\mbox{.}(2026)]%
        {he2026tear}
\bibfield{author}{\bibinfo{person}{Jiaming He}, \bibinfo{person}{Guanyu Hou},
  \bibinfo{person}{Hongwei Li}, \bibinfo{person}{Zhicong Huang},
  \bibinfo{person}{Kangjie Chen}, \bibinfo{person}{Yi Yu},
  \bibinfo{person}{Wenbo Jiang}, \bibinfo{person}{Guowen Xu}, {and}
  \bibinfo{person}{Tianwei Zhang}.} \bibinfo{year}{2026}\natexlab{}.
\newblock \showarticletitle{{TEAR}: Temporal-aware Automated Red-teaming for
  Text-to-Video Models}. In \bibinfo{booktitle}{\emph{Proceedings of the
  IEEE/CVF Conference on Computer Vision and Pattern Recognition}}.
  \bibinfo{pages}{41--50}.
\newblock


\bibitem[Ho et~al\mbox{.}(2022a)]%
        {ho2022imagenvideo}
\bibfield{author}{\bibinfo{person}{Jonathan Ho}, \bibinfo{person}{William
  Chan}, \bibinfo{person}{Chitwan Saharia}, \bibinfo{person}{Jay Whang},
  \bibinfo{person}{Ruiqi Gao}, \bibinfo{person}{Alexey Gritsenko},
  \bibinfo{person}{Diederik~P. Kingma}, \bibinfo{person}{Ben Poole},
  \bibinfo{person}{Mohammad Norouzi}, \bibinfo{person}{David~J. Fleet}, {and}
  \bibinfo{person}{Tim Salimans}.} \bibinfo{year}{2022}\natexlab{a}.
\newblock \showarticletitle{{Imagen Video}: High Definition Video Generation
  with Diffusion Models}.
\newblock \bibinfo{journal}{\emph{arXiv preprint arXiv:2210.02303}}
  (\bibinfo{year}{2022}).
\newblock


\bibitem[Ho et~al\mbox{.}(2020)]%
        {ho2020ddpm}
\bibfield{author}{\bibinfo{person}{Jonathan Ho}, \bibinfo{person}{Ajay Jain},
  {and} \bibinfo{person}{Pieter Abbeel}.} \bibinfo{year}{2020}\natexlab{}.
\newblock \showarticletitle{Denoising Diffusion Probabilistic Models}. In
  \bibinfo{booktitle}{\emph{Advances in Neural Information Processing
  Systems}}, Vol.~\bibinfo{volume}{33}. \bibinfo{pages}{6840--6851}.
\newblock


\bibitem[Ho et~al\mbox{.}(2022b)]%
        {ho2022videodiffusion}
\bibfield{author}{\bibinfo{person}{Jonathan Ho}, \bibinfo{person}{Tim
  Salimans}, \bibinfo{person}{Alexey Gritsenko}, \bibinfo{person}{William
  Chan}, \bibinfo{person}{Mohammad Norouzi}, {and} \bibinfo{person}{David~J.
  Fleet}.} \bibinfo{year}{2022}\natexlab{b}.
\newblock \showarticletitle{Video Diffusion Models}. In
  \bibinfo{booktitle}{\emph{Advances in Neural Information Processing
  Systems}}, Vol.~\bibinfo{volume}{35}. \bibinfo{pages}{8633--8646}.
\newblock


\bibitem[Huang et~al\mbox{.}(2024)]%
        {huang2024vbench}
\bibfield{author}{\bibinfo{person}{Ziqi Huang}, \bibinfo{person}{Yinan He},
  \bibinfo{person}{Jiashuo Yu}, \bibinfo{person}{Fan Zhang},
  \bibinfo{person}{Chenyang Si}, \bibinfo{person}{Yuming Jiang},
  \bibinfo{person}{Yuanhan Zhang}, \bibinfo{person}{Tianxing Wu},
  \bibinfo{person}{Qingyang Jin}, \bibinfo{person}{Nattapol Chanpaisit},
  \bibinfo{person}{Yaohui Wang}, \bibinfo{person}{Xinyuan Chen},
  \bibinfo{person}{Limin Wang}, \bibinfo{person}{Dahua Lin},
  \bibinfo{person}{Yu Qiao}, {and} \bibinfo{person}{Ziwei Liu}.}
  \bibinfo{year}{2024}\natexlab{}.
\newblock \showarticletitle{{VBench}: Comprehensive Benchmark Suite for Video
  Generative Models}. In \bibinfo{booktitle}{\emph{Proceedings of the IEEE/CVF
  Conference on Computer Vision and Pattern Recognition}}.
  \bibinfo{pages}{21807--21818}.
\newblock


\bibitem[Huang et~al\mbox{.}(2026)]%
        {huang2026vbenchpp}
\bibfield{author}{\bibinfo{person}{Ziqi Huang}, \bibinfo{person}{Fan Zhang},
  \bibinfo{person}{Xiaojie Xu}, \bibinfo{person}{Yinan He},
  \bibinfo{person}{Jiashuo Yu}, \bibinfo{person}{Ziyue Dong},
  \bibinfo{person}{Qianli Ma}, \bibinfo{person}{Nattapol Chanpaisit},
  \bibinfo{person}{Chenyang Si}, \bibinfo{person}{Yuming Jiang},
  \bibinfo{person}{Yaohui Wang}, \bibinfo{person}{Xinyuan Chen},
  \bibinfo{person}{Ying-Cong Chen}, \bibinfo{person}{Limin Wang},
  \bibinfo{person}{Dahua Lin}, \bibinfo{person}{Yu Qiao}, {and}
  \bibinfo{person}{Ziwei Liu}.} \bibinfo{year}{2026}\natexlab{}.
\newblock \showarticletitle{{VBench++}: Comprehensive and Versatile Benchmark
  Suite for Video Generative Models}.
\newblock \bibinfo{journal}{\emph{IEEE Transactions on Pattern Analysis and
  Machine Intelligence}} \bibinfo{volume}{48}, \bibinfo{number}{3}
  (\bibinfo{year}{2026}), \bibinfo{pages}{3268--3285}.
\newblock


\bibitem[Kang et~al\mbox{.}(2023)]%
        {kang2023diffattack}
\bibfield{author}{\bibinfo{person}{Mintong Kang}, \bibinfo{person}{Dawn Song},
  {and} \bibinfo{person}{Bo Li}.} \bibinfo{year}{2023}\natexlab{}.
\newblock \showarticletitle{{DiffAttack}: Evasion Attacks Against
  Diffusion-Based Adversarial Purification}. In
  \bibinfo{booktitle}{\emph{Advances in Neural Information Processing
  Systems}}, Vol.~\bibinfo{volume}{36}.
\newblock


\bibitem[Li et~al\mbox{.}(2025)]%
        {li2025t2vattack}
\bibfield{author}{\bibinfo{person}{Changzhen Li}, \bibinfo{person}{Yuecong
  Min}, \bibinfo{person}{Jie Zhang}, \bibinfo{person}{Zheng Yuan},
  \bibinfo{person}{Shiguang Shan}, {and} \bibinfo{person}{Xilin Chen}.}
  \bibinfo{year}{2025}\natexlab{}.
\newblock \showarticletitle{{T2VAttack}: Adversarial Attack on Text-to-Video
  Diffusion Models}.
\newblock \bibinfo{journal}{\emph{arXiv preprint arXiv:2512.23953}}
  (\bibinfo{year}{2025}).
\newblock


\bibitem[Liang and Wu(2023)]%
        {liang2023mist}
\bibfield{author}{\bibinfo{person}{Chumeng Liang} {and} \bibinfo{person}{Xiaoyu
  Wu}.} \bibinfo{year}{2023}\natexlab{}.
\newblock \showarticletitle{Mist: Towards Improved Adversarial Examples for
  Diffusion Models}.
\newblock \bibinfo{journal}{\emph{arXiv preprint arXiv:2305.12683}}
  (\bibinfo{year}{2023}).
\newblock


\bibitem[Madry et~al\mbox{.}(2018)]%
        {madry2018towards}
\bibfield{author}{\bibinfo{person}{Aleksander Madry},
  \bibinfo{person}{Aleksandar Makelov}, \bibinfo{person}{Ludwig Schmidt},
  \bibinfo{person}{Dimitris Tsipras}, {and} \bibinfo{person}{Adrian Vladu}.}
  \bibinfo{year}{2018}\natexlab{}.
\newblock \showarticletitle{Towards Deep Learning Models Resistant to
  Adversarial Attacks}. In \bibinfo{booktitle}{\emph{International Conference
  on Learning Representations}}.
\newblock


\bibitem[Rombach et~al\mbox{.}(2022)]%
        {rombach2022latent}
\bibfield{author}{\bibinfo{person}{Robin Rombach}, \bibinfo{person}{Andreas
  Blattmann}, \bibinfo{person}{Dominik Lorenz}, \bibinfo{person}{Patrick
  Esser}, {and} \bibinfo{person}{Bj{\"o}rn Ommer}.}
  \bibinfo{year}{2022}\natexlab{}.
\newblock \showarticletitle{High-Resolution Image Synthesis with Latent
  Diffusion Models}. In \bibinfo{booktitle}{\emph{Proceedings of the IEEE/CVF
  Conference on Computer Vision and Pattern Recognition}}.
  \bibinfo{pages}{10684--10695}.
\newblock


\bibitem[Salman et~al\mbox{.}(2023)]%
        {salman2023photoguard}
\bibfield{author}{\bibinfo{person}{Hadi Salman}, \bibinfo{person}{Alaa
  Khaddaj}, \bibinfo{person}{Guillaume Leclerc}, \bibinfo{person}{Andrew
  Ilyas}, {and} \bibinfo{person}{Aleksander Madry}.}
  \bibinfo{year}{2023}\natexlab{}.
\newblock \showarticletitle{Raising the Cost of Malicious {AI}-Powered Image
  Editing}. In \bibinfo{booktitle}{\emph{Proceedings of the 40th International
  Conference on Machine Learning}} \emph{(\bibinfo{series}{Proceedings of
  Machine Learning Research}, Vol.~\bibinfo{volume}{202})}.
  \bibinfo{pages}{29894--29918}.
\newblock


\bibitem[Singer et~al\mbox{.}(2022)]%
        {singer2022makeavideo}
\bibfield{author}{\bibinfo{person}{Uriel Singer}, \bibinfo{person}{Adam
  Polyak}, \bibinfo{person}{Thomas Hayes}, \bibinfo{person}{Xi Yin},
  \bibinfo{person}{Jie An}, \bibinfo{person}{Songyang Zhang},
  \bibinfo{person}{Qiyuan Hu}, \bibinfo{person}{Harry Yang},
  \bibinfo{person}{Oron Ashual}, \bibinfo{person}{Oran Gafni},
  \bibinfo{person}{Devi Parikh}, \bibinfo{person}{Sonal Gupta}, {and}
  \bibinfo{person}{Yaniv Taigman}.} \bibinfo{year}{2022}\natexlab{}.
\newblock \showarticletitle{{Make-A-Video}: Text-to-Video Generation without
  Text-Video Data}.
\newblock \bibinfo{journal}{\emph{arXiv preprint arXiv:2209.14792}}
  (\bibinfo{year}{2022}).
\newblock


\bibitem[Song et~al\mbox{.}(2021)]%
        {song2021ddim}
\bibfield{author}{\bibinfo{person}{Jiaming Song}, \bibinfo{person}{Chenlin
  Meng}, {and} \bibinfo{person}{Stefano Ermon}.}
  \bibinfo{year}{2021}\natexlab{}.
\newblock \showarticletitle{Denoising Diffusion Implicit Models}. In
  \bibinfo{booktitle}{\emph{International Conference on Learning
  Representations}}.
\newblock


\bibitem[Song et~al\mbox{.}(2025)]%
        {song2025idprotector}
\bibfield{author}{\bibinfo{person}{Yiren Song}, \bibinfo{person}{Pei Yang},
  \bibinfo{person}{Hai Ci}, {and} \bibinfo{person}{Mike~Zheng Shou}.}
  \bibinfo{year}{2025}\natexlab{}.
\newblock \showarticletitle{{IDProtector}: An Adversarial Noise Encoder to
  Protect Against ID-Preserving Image Generation}. In
  \bibinfo{booktitle}{\emph{Proceedings of the IEEE/CVF Conference on Computer
  Vision and Pattern Recognition}}. \bibinfo{pages}{3019--3028}.
\newblock


\bibitem[Van~Le et~al\mbox{.}(2023)]%
        {vanle2023antidreambooth}
\bibfield{author}{\bibinfo{person}{Thanh Van~Le}, \bibinfo{person}{Hao Phung},
  \bibinfo{person}{Thuan~Hoang Nguyen}, \bibinfo{person}{Quan Dao},
  \bibinfo{person}{Ngoc~N. Tran}, {and} \bibinfo{person}{Anh Tran}.}
  \bibinfo{year}{2023}\natexlab{}.
\newblock \showarticletitle{{Anti-DreamBooth}: Protecting Users from
  Personalized Text-to-Image Synthesis}. In
  \bibinfo{booktitle}{\emph{Proceedings of the IEEE/CVF International
  Conference on Computer Vision}}. \bibinfo{pages}{2116--2127}.
\newblock


\bibitem[Vaswani et~al\mbox{.}(2017)]%
        {vaswani2017attention}
\bibfield{author}{\bibinfo{person}{Ashish Vaswani}, \bibinfo{person}{Noam
  Shazeer}, \bibinfo{person}{Niki Parmar}, \bibinfo{person}{Jakob Uszkoreit},
  \bibinfo{person}{Llion Jones}, \bibinfo{person}{Aidan~N. Gomez},
  \bibinfo{person}{Lukasz Kaiser}, {and} \bibinfo{person}{Illia Polosukhin}.}
  \bibinfo{year}{2017}\natexlab{}.
\newblock \showarticletitle{Attention Is All You Need}. In
  \bibinfo{booktitle}{\emph{Advances in Neural Information Processing
  Systems}}, Vol.~\bibinfo{volume}{30}. \bibinfo{pages}{5998--6008}.
\newblock


\bibitem[Vu et~al\mbox{.}(2026)]%
        {vu2026antii2v}
\bibfield{author}{\bibinfo{person}{Duc Vu}, \bibinfo{person}{Anh Nguyen},
  \bibinfo{person}{Chi Tran}, {and} \bibinfo{person}{Anh Tran}.}
  \bibinfo{year}{2026}\natexlab{}.
\newblock \showarticletitle{{Anti-I2V}: Safeguarding Your Photos from Malicious
  Image-to-Video Generation}. In \bibinfo{booktitle}{\emph{Proceedings of the
  IEEE/CVF Conference on Computer Vision and Pattern Recognition}}.
  \bibinfo{pages}{37621--37631}.
\newblock


\bibitem[Wang et~al\mbox{.}(2023a)]%
        {wang2023semanticadversarial}
\bibfield{author}{\bibinfo{person}{Chenan Wang}, \bibinfo{person}{Jinhao Duan},
  \bibinfo{person}{Chaowei Xiao}, \bibinfo{person}{Edward Kim},
  \bibinfo{person}{Matthew Stamm}, {and} \bibinfo{person}{Kaidi Xu}.}
  \bibinfo{year}{2023}\natexlab{a}.
\newblock \showarticletitle{Semantic Adversarial Attacks via Diffusion Models}.
  In \bibinfo{booktitle}{\emph{British Machine Vision Conference}}.
\newblock


\bibitem[Wang et~al\mbox{.}(2024)]%
        {wang2024simac}
\bibfield{author}{\bibinfo{person}{Feifei Wang}, \bibinfo{person}{Zhentao Tan},
  \bibinfo{person}{Tianyi Wei}, \bibinfo{person}{Yue Wu}, {and}
  \bibinfo{person}{Qidong Huang}.} \bibinfo{year}{2024}\natexlab{}.
\newblock \showarticletitle{{SimAC}: A Simple Anti-Customization Method for
  Protecting Face Privacy against Text-to-Image Synthesis of Diffusion Models}.
  In \bibinfo{booktitle}{\emph{Proceedings of the IEEE/CVF Conference on
  Computer Vision and Pattern Recognition}}. \bibinfo{pages}{12047--12056}.
\newblock


\bibitem[Wang et~al\mbox{.}(2023b)]%
        {wang2023modelscope}
\bibfield{author}{\bibinfo{person}{Jiuniu Wang}, \bibinfo{person}{Hangjie
  Yuan}, \bibinfo{person}{Dayou Chen}, \bibinfo{person}{Yingya Zhang},
  \bibinfo{person}{Xiang Wang}, {and} \bibinfo{person}{Shiwei Zhang}.}
  \bibinfo{year}{2023}\natexlab{b}.
\newblock \showarticletitle{{ModelScope} Text-to-Video Technical Report}.
\newblock \bibinfo{journal}{\emph{arXiv preprint arXiv:2308.06571}}
  (\bibinfo{year}{2023}).
\newblock


\bibitem[Xing et~al\mbox{.}(2025)]%
        {xing2023dynamicrafter}
\bibfield{author}{\bibinfo{person}{Jinbo Xing}, \bibinfo{person}{Menghan Xia},
  \bibinfo{person}{Yong Zhang}, \bibinfo{person}{Haoxin Chen},
  \bibinfo{person}{Wangbo Yu}, \bibinfo{person}{Hanyuan Liu},
  \bibinfo{person}{Gongye Liu}, \bibinfo{person}{Xintao Wang},
  \bibinfo{person}{Ying Shan}, {and} \bibinfo{person}{Tien-Tsin Wong}.}
  \bibinfo{year}{2025}\natexlab{}.
\newblock \showarticletitle{{DynamiCrafter}: Animating Open-domain Images with
  Video Diffusion Priors}. In \bibinfo{booktitle}{\emph{Computer Vision--ECCV
  2024}} \emph{(\bibinfo{series}{Lecture Notes in Computer Science},
  Vol.~\bibinfo{volume}{15104})}. \bibinfo{publisher}{Springer},
  \bibinfo{pages}{399--417}.
\newblock


\bibitem[Xing et~al\mbox{.}(2023)]%
        {xing2023videodiffusionsurvey}
\bibfield{author}{\bibinfo{person}{Zhen Xing}, \bibinfo{person}{Qijun Feng},
  \bibinfo{person}{Haoran Chen}, \bibinfo{person}{Qi Dai}, \bibinfo{person}{Han
  Hu}, \bibinfo{person}{Hang Xu}, \bibinfo{person}{Zuxuan Wu}, {and}
  \bibinfo{person}{Yu-Gang Jiang}.} \bibinfo{year}{2023}\natexlab{}.
\newblock \showarticletitle{A Survey on Video Diffusion Models}.
\newblock \bibinfo{journal}{\emph{arXiv preprint arXiv:2310.10647}}
  (\bibinfo{year}{2023}).
\newblock


\bibitem[Yu et~al\mbox{.}(2024)]%
        {yu2024mfa}
\bibfield{author}{\bibinfo{person}{Hongwei Yu}, \bibinfo{person}{Jiansheng
  Chen}, \bibinfo{person}{Xinlong Ding}, \bibinfo{person}{Yudong Zhang},
  \bibinfo{person}{Ting Tang}, {and} \bibinfo{person}{Huimin Ma}.}
  \bibinfo{year}{2024}\natexlab{}.
\newblock \showarticletitle{Step Vulnerability Guided Mean Fluctuation
  Adversarial Attack against Conditional Diffusion Models}. In
  \bibinfo{booktitle}{\emph{Proceedings of the AAAI Conference on Artificial
  Intelligence}}, Vol.~\bibinfo{volume}{38}. \bibinfo{pages}{6791--6799}.
\newblock


\bibitem[Yu et~al\mbox{.}(2026)]%
        {yu2026medusa}
\bibfield{author}{\bibinfo{person}{Hongwei Yu}, \bibinfo{person}{Daoqing Zha},
  \bibinfo{person}{Xinlong Ding}, \bibinfo{person}{Jiawei Li},
  \bibinfo{person}{Junbao Zhuo}, \bibinfo{person}{Qiankun Liu},
  \bibinfo{person}{Huimin Ma}, {and} \bibinfo{person}{Jiansheng Chen}.}
  \bibinfo{year}{2026}\natexlab{}.
\newblock \showarticletitle{{MEDUSA}: Motion Elimination in Diffusion Using
  Spectral Attack}. In \bibinfo{booktitle}{\emph{Proceedings of the
  International Conference on Machine Learning}}.
\newblock


\bibitem[Zeng et~al\mbox{.}(2025)]%
        {zeng2024advi2i}
\bibfield{author}{\bibinfo{person}{Yaopei Zeng}, \bibinfo{person}{Yuanpu Cao},
  \bibinfo{person}{Bochuan Cao}, \bibinfo{person}{Yurui Chang},
  \bibinfo{person}{Jinghui Chen}, {and} \bibinfo{person}{Lu Lin}.}
  \bibinfo{year}{2025}\natexlab{}.
\newblock \showarticletitle{{AdvI2I}: Adversarial Image Attack on
  Image-to-Image Diffusion Models}. In \bibinfo{booktitle}{\emph{Proceedings of
  the 42nd International Conference on Machine Learning}}
  \emph{(\bibinfo{series}{Proceedings of Machine Learning Research},
  Vol.~\bibinfo{volume}{267})}. \bibinfo{pages}{74037--74050}.
\newblock


\bibitem[Zhang et~al\mbox{.}(2023)]%
        {zhang2023i2vgenxl}
\bibfield{author}{\bibinfo{person}{Shiwei Zhang}, \bibinfo{person}{Jiayu Wang},
  \bibinfo{person}{Yingya Zhang}, \bibinfo{person}{Kang Zhao},
  \bibinfo{person}{Hangjie Yuan}, \bibinfo{person}{Zhiwu Qin},
  \bibinfo{person}{Xiang Wang}, \bibinfo{person}{Deli Zhao}, {and}
  \bibinfo{person}{Jingren Zhou}.} \bibinfo{year}{2023}\natexlab{}.
\newblock \showarticletitle{{I2VGen-XL}: High-Quality Image-to-Video Synthesis
  via Cascaded Diffusion Models}.
\newblock \bibinfo{journal}{\emph{arXiv preprint arXiv:2311.04145}}
  (\bibinfo{year}{2023}).
\newblock


\bibitem[Zhao et~al\mbox{.}(2024)]%
        {zhao2024protective}
\bibfield{author}{\bibinfo{person}{Zhengyue Zhao}, \bibinfo{person}{Jinhao
  Duan}, \bibinfo{person}{Kaidi Xu}, \bibinfo{person}{Chenan Wang},
  \bibinfo{person}{Rui Zhang}, \bibinfo{person}{Zidong Du}, \bibinfo{person}{Qi
  Guo}, {and} \bibinfo{person}{Xing Hu}.} \bibinfo{year}{2024}\natexlab{}.
\newblock \showarticletitle{Can Protective Perturbation Safeguard Personal Data
  from Being Exploited by {Stable Diffusion}?}. In
  \bibinfo{booktitle}{\emph{Proceedings of the IEEE/CVF Conference on Computer
  Vision and Pattern Recognition}}. \bibinfo{pages}{24398--24407}.
\newblock


\end{thebibliography}

\clearpage
\appendix
\onecolumn
\section{Loss-Weight Sensitivity}
\label{app:loss-weight-sensitivity}
\begin{figure}[htbp]
  \centering
  \includegraphics[width=0.96\textwidth]{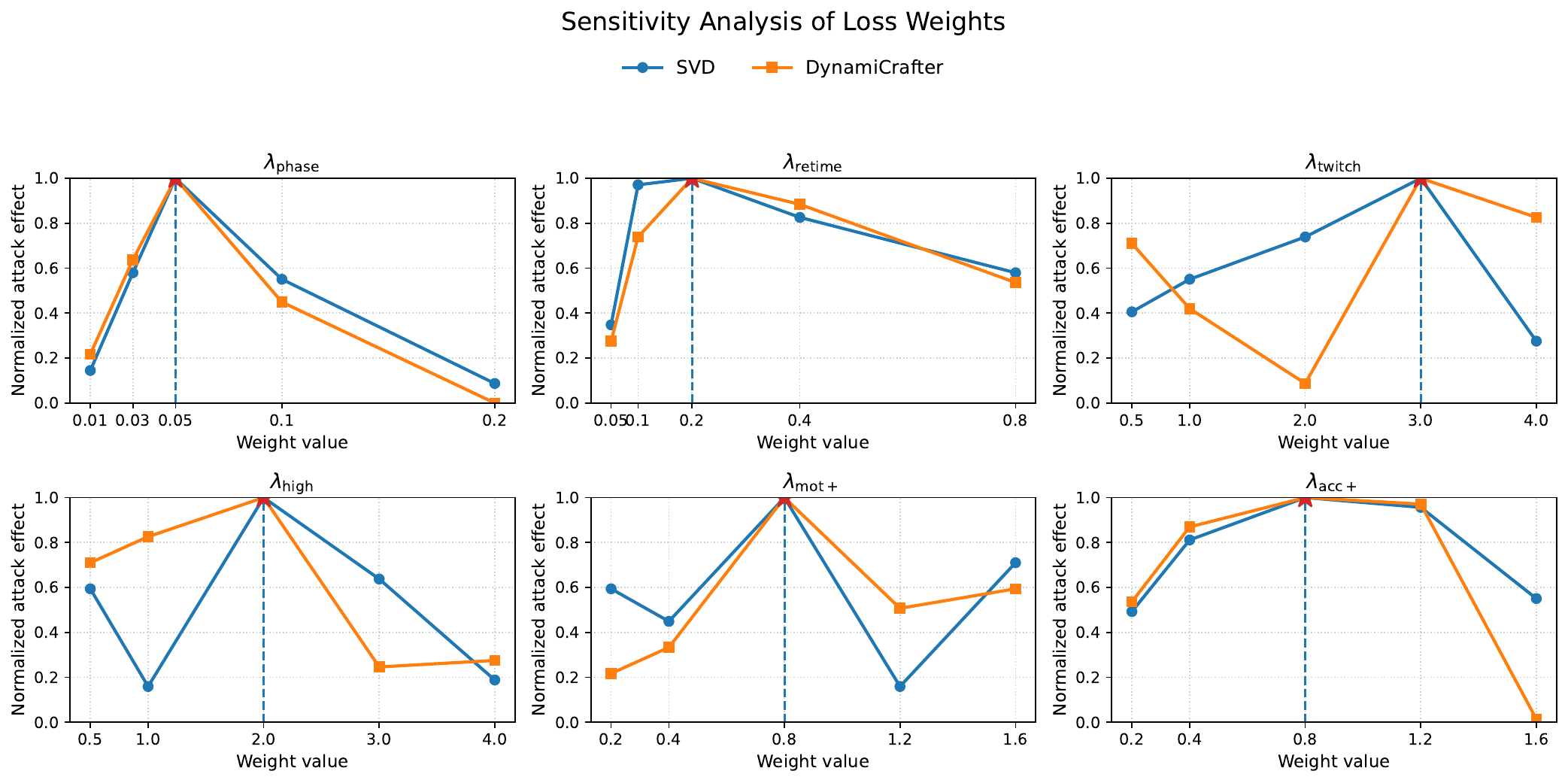}
  \caption{Sensitivity analysis of the six loss weights on SVD and
  DynamiCrafter. Each subplot varies one weight while holding the others at
  their default values. The dashed vertical line denotes the selected value,
  and stars mark the corresponding attack effects.}
  \Description{Six line plots show loss-weight sensitivity on Stable Video
  Diffusion and DynamiCrafter. Each plot varies one loss weight and reports
  normalized attack effect. Dashed vertical lines and star markers identify
  the selected weight and its corresponding effects.}
  \label{fig:weight-sensitivity}
\end{figure}
We evaluate the sensitivity of PhaseBreak to each loss weight while keeping
the remaining weights fixed at their default values. Figure~\ref{fig:weight-sensitivity}
reports the normalized attack effect on Stable Video Diffusion (SVD) and
DynamiCrafter across the candidate values. The dashed vertical lines and star
markers indicate the values used in the main experiments.

\section{Protected Input Image Quality}
\label{app:input-image-quality}
We evaluate the visual fidelity of the protected input images using PSNR,
SSIM, and LPIPS. As shown in Table~\ref{tab:input-image-quality}, PhaseBreak
maintains high pixel- and structure-level similarity to the corresponding
clean images under both target models, while keeping perceptual distances
low.

\begin{table}[ht]
  \centering
  \small
  \setlength{\tabcolsep}{8pt}
  \renewcommand{\arraystretch}{1.05}
  \caption{Visual quality of protected input images. Higher PSNR and SSIM and
  lower LPIPS indicate better preservation of the clean input.}
  \label{tab:input-image-quality}
  \begin{tabular}{@{}lccc@{}}
    \toprule
    \textbf{Target Model} & \textbf{PSNR}$\uparrow$ &
    \textbf{SSIM}$\uparrow$ & \textbf{LPIPS}$\downarrow$ \\
    \midrule
    Stable Video Diffusion & 37.5089 & 0.9290 & 0.0678 \\
    DynamiCrafter          & 38.0340 & 0.9385 & 0.0507 \\
    \bottomrule
  \end{tabular}
\end{table}

\end{document}